\documentclass[runningheads,a4paper]{llncs}
\usepackage{amssymb}
\setcounter{tocdepth}{3}
\usepackage{graphicx}
\usepackage{rotating}
\usepackage[linesnumbered,ruled,vlined]{algorithm2e}
\usepackage{comment}
\usepackage{url}
\urldef{\mailsa}\path|giacomo.kahn@isima.fr|
\urldef{\mailsb}\path|contact@alexandrebazin.com|
\newcommand{\keywords}[1]{\par\addvspace\baselineskip
\noindent\keywordname\enspace\ignorespaces#1}
\usepackage{cite}
\usepackage{amsmath}
\usepackage{subfig}

\DeclareMathOperator{\argmax}{argmax} 

\usepackage{tikz}
\usetikzlibrary[arrows,patterns, decorations.pathreplacing]

\begin{document}

\mainmatter  

\title{Introducer Concepts in $n$-Dimensional Contexts}

\titlerunning{Introducer Concepts in $n$-Dimensional Contexts}

%
%
\author{Giacomo Kahn\inst{1} \and Alexandre Bazin\inst{2}}
\authorrunning{Kahn \& Bazin}

\institute{LIMOS \& Universit\'e Clermont Auvergne, France \\
\and
Le2i - Laboratoire Electronique, Informatique et Image, France\\
\mailsa, \mailsb\\
}

%
%

\toctitle{Lecture Notes in Computer Science}
\tocauthor{Authors' Instructions}
\maketitle

\begin{abstract}
Concept lattices are well-known conceptual structures that organise interesting patterns---the concepts---extracted from data.
In some applications, such as software engineering or data mining, the size of the lattice can be a problem, as it is often too large to be efficiently computed, and too complex to be browsed.
For this reason, the Galois Sub-Hierarchy, a restriction of the concept lattice to introducer concepts, has been introduced as a smaller alternative.
In this paper, we generalise the Galois Sub-Hierarchy to $n$-lattices, conceptual structures obtained from multidimensional data in the same way that concept lattices are obtained from binary relations.

\keywords{Formal Concept Analysis, Polyadic Concept Analysis, Introducer Concept, AOC-poset, Galois Sub-Hierarchy.}
\end{abstract}

\section{Introduction}

Formal Concept Analysis~\cite{DBLP:books/daglib/0095956} is a mathematical framework that allows, from a binary relation, to extract interesting patterns called concepts.
Those patterns form a hierarchy called a concept lattice.

\medskip
The size of the lattice, potentially exponential in the size of the relation, is one of the main drawbacks in its use as data representation.
In some cases, it is possible to avoid using the whole lattice, as it contains redundant information~\cite{DBLP:journals/tapos/GodinMMMAC98}.
The AOC-poset (or Galois Sub-Hierarchy) is a sub-order of the lattice that preserves only some of its key elements~\cite{Godin1993}.
Its size is potentially much smaller than the size of the associated concept lattice~\cite{DBLP:conf/icfca/CarbonnelHG15} and it can be used in place of the concept lattice to perform certain tasks~\cite{DBLP:journals/ijgs/DolquesBHG16, DBLP:conf/ismis/BazinCK17}.

\medskip
The generalisation of FCA to the $n$-dimensional case, Polyadic Concept Analysis~\cite{DBLP:journals/order/Voutsadakis02}, focuses on multidimensional data, i.e. $n$-ary relations. 

\medskip
In this paper, we generalise the notion of AOC-posets to $n$-lattices.
In Section~\ref{sec:def}, we provide the definitions and notations that we use throughout the paper.
Section~\ref{sec:introducers} is dedicated to the definition of introducer concepts in $n$-lattices, and some properties about those concepts.
In Section~\ref{sec:algo}, we present an algorithm to compute the introducer sub-order, and study its complexity.
Finally, we conclude and discuss some future works.

\section{Definitions and Notations}
\label{sec:def}

In this section, we introduce classical definitions and notations from Formal Concept Analysis and Polyadic Concept Analysis.
They can also be found in~\cite{DBLP:books/daglib/0095956} and~\cite{DBLP:journals/order/Voutsadakis02}.

\subsection{Formal Concept Analysis}

From now on, we will 
omit the brackets in the notation for sets when no confusion is induced by this simplification.

\medskip
A (formal) context is a triple $(\mathcal S_1,\mathcal S_2,\mathcal R)$ in which $\mathcal S_1$ and $\mathcal S_2$ are sets and $\mathcal R\subseteq \mathcal S_1\times \mathcal S_2$ is a binary relation between them.
The elements of $\mathcal S_1$ are called the (formal) objects and those of $\mathcal S_2$ the (formal) attributes.
A pair $(x_1,x_2)\in\mathcal R$ means that ``the object $x_1$ has the attribute $x_2$''.
A context can be represented as a cross table, as shown in Fig.~\ref{fig:toycontext}. For instance, object $1$ has attributes $a$ and $b$, and attribute $b$ is shared by objects $1$ and $2$.

\medskip
\begin{figure}[ht]
\centering
\begin{tabular}{c | c c c}
 & a & b & c \\
\hline
1 &$\times$&$\times$& \\
2 & & $\times$ & $\times$ \\
3 & $\times$ & & $\times$ \\
\end{tabular}
\caption{\label{fig:toycontext}An example of a context $\mathcal C =(\mathcal S_1, \mathcal S_2,\mathcal R)$ with $\mathcal S_1=\{1,2,3\}$ and $\mathcal S_2 =\{a, b, c\}$.}
\end{figure}

\medskip

Two \emph{derivation operators} $(\cdot)^\prime:2^{\mathcal S_1}\mapsto 2^{\mathcal S_2}$ and $(\cdot)^\prime:2^{\mathcal S_2}\mapsto 2^{\mathcal S_1}$ are defined. For $X_1\subseteq \mathcal S_1$ and $X_2\subseteq \mathcal S_2$, $X_1^\prime=\{a\ |\ \forall x\in X_1, (x,a)\in\mathcal R\}$ and $X_2^\prime =\{o\ |\ \forall y\in X_2, (o,y)\in \mathcal R\}$.

\medskip
A formal concept is a pair $(X_1,X_2)$ where $X_1\subseteq \mathcal S_1$, $X_2\subseteq \mathcal S_2$, $X_1' = X_2$ and $X_2' = X_1$.
This corresponds to a maximal set of objects that share a maximal set of attributes and can be viewed as a maximal rectangle full of crosses in the formal context, up to permutations on the elements of the rows and columns.
$X_1$ is called the \emph{extent} of the concept, while $X_2$ is called the \emph{intent}.

\medskip
The set of all concepts of a context ordered by the inclusion relation on either one of their components forms a complete lattice.
Additionally, every complete lattice is isomorphic to the concept lattice of some context~\cite{DBLP:books/daglib/0095956}. The concept lattice associated with the formal context from Fig.~\ref{fig:toycontext} is shown in Fig.~\ref{fig:toylattice}.

\begin{figure}[ht]
\centering
\begin{tikzpicture}[-,>=stealth',shorten >=1pt,shorten <=4pt]

  \node (bot) at (0,0) {$(\emptyset, abc)$};
  \node (c1) at (-1,1.3) {$(1,ab)$};
  \node (c2) at (0,1.3) {$(2,bc)$};
  \node (c3) at (1,1.3) {$(3,ac)$};
  \node (b1) at (-1,2.7) {$(12,b)$};
  \node (b2) at (0,2.7) {$(13,a)$};
  \node (b3) at (1,2.7) {$(23,c)$};
  \node (top) at (0,4) {$(123,\emptyset)$};
  \path[] 
  (bot) edge [] (c1)
  (bot) edge [] (c2)
  (bot) edge [] (c3)
  (top) edge [] (b1)
  (top) edge [] (b2)
  (top) edge [] (b3)
  (c1) edge [] (b1)
  (c1) edge [] (b2)
  (c2) edge [] (b1)
  (c2) edge [] (b3)
  (c3) edge [] (b2)
  (c3) edge [] (b3);

\end{tikzpicture}
\caption{Concept lattice associated with $\mathcal C$.\label{fig:toylattice}}
\end{figure}
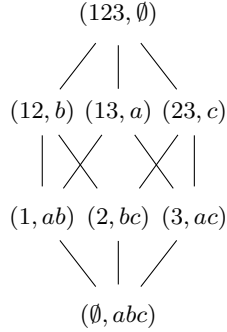

\medskip
In an application, the size of the concept lattice might be a drawback.
For this reason, Godin et al.~\cite{Godin1993} introduced a sub-hierarchy of the lattice, the Galois sub-hierarchy.
This sub-hierarchy was introduced and is most often used in the field of software engineering, but is also used in other fields, such as Relational Concept Analysis (RCA)~\cite{DBLP:journals/amai/HuchardHRV07} and data mining~\cite{DBLP:journals/ijgs/DolquesBHG16}.
Additionally, the Galois sub-hierarchy is integrated in some FCA tools, such as Latviz~\cite{DBLP:conf/cla/AlamLN16}, {\sc Galicia}~\cite{valtchev2003galicia}, RCAExplore~\footnote[1]{\url{http://dolques.free.fr/rcaexplore/}} or AOC-poset Builder~\footnote[2]{\url{http://www.lirmm.fr/AOC-poset-Builder/}}.

\medskip
\begin{definition}[Introducer concept]
An \emph{Object-Concept} is a concept $(o^{\prime\prime}, o^\prime)$ with $o\in \mathcal S_1$.
We say that this concept \emph{introduces} $o$.

An \emph{Attribute-Concept} is a concept $(a^\prime, a^{\prime\prime})$ with $a\in \mathcal S_2$.
We say that this concept \emph{introduces} $a$.
\end{definition}

\medskip
A concept can introduce both attributes and objects, or it can introduce neither.
We call the sub-order restricted to the introducer concepts an \emph{Attribute-Object-Concept partially ordered set} (AOC-poset), or Galois Sub-Hierarchy (GSH).
While a concept lattice can have up to $2^{min(|\mathcal S_1|,|\mathcal S_2|)}$ concepts, the associated GSH has at most $|\mathcal S_1|+|\mathcal S_2|$ elements.
Several algorithms exist to compute the GSH~\cite{Dicky1995, DBLP:journals/ita/HuchardDL00, DBLP:conf/icfca/ArevaloBHPS07, Berry2014}.

\subsection{Polyadic Concept Analysis}

\begin{definition}

An \emph{$n$-context} is an $(n+1)$-tuple $\mathcal{C} = (\mathcal S_1,\dots,\mathcal S_n,\mathcal R)$ in which $S_i$, $i\in \{1,\dots,n\}$, is a set called a \emph{dimension} and $R$ is an $n$-ary relation between the dimensions.

\end{definition}

\medskip
An $n$-context can be represented by a $|S_1|\times \dots \times |S_n|$ cross table as illustrated in Fig. \ref{fig:context}.

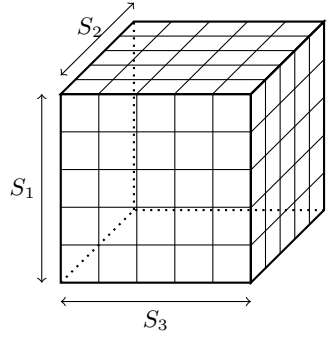
\begin{figure}[ht]
\centering
\begin{tikzpicture}[scale=0.5]
\pgfmathsetmacro{\cubex}{5}
\pgfmathsetmacro{\cubey}{5}
\pgfmathsetmacro{\cubez}{5}
\draw[thick] (0,0,0) -- ++(-\cubex,0,0) -- ++(0,-\cubey,0) -- ++(\cubex,0,0) -- cycle;
\draw[thick] (0,0,0) -- ++(0,0,-\cubez) -- ++(0,-\cubey,0) -- ++(0,0,\cubez) -- cycle;
\draw[thick] (0,0,0) -- ++(-\cubex,0,0) -- ++(0,0,-\cubez) -- ++(\cubex,0,0) -- cycle;
\draw[dotted, thick] (-\cubex,-\cubey,0) -- ++(0,0,-\cubez) -- ++ (\cubex,0,0);
\draw[dotted, thick] (-\cubex,-\cubey,-\cubez) -- ++(0,\cubey,0);
\foreach \x in {1,...,\cubex}{
	\draw (-\cubex+\x, -\cubey, 0) -- ++ (0,\cubey, 0);
	\draw (-\cubex-0.01+\x, 0, 0) -- ++ (0,0,-\cubez);
}
\foreach \y in {1,...,\cubey}{
	\draw (-\cubex, -\cubey+\y, 0) -- ++ (\cubex,0, 0);
	\draw (0, -\cubey-0.01+\y, 0) -- ++ (0, 0, -\cubez);
}
\foreach \z in {1,...,\cubez}
	\draw (-\cubex, 0, -\z)-- ++ (\cubex,0, 0) -- ++ (0,-\cubey,0);

\node (S1) at (-\cubex-1, -2.5,0) {$S_1$};
\coordinate (boutS11) at (-\cubex-0.5, 0, 0);
\coordinate (boutS12) at (-\cubex-0.5, -\cubey, 0);
\draw[<->] (boutS11) -- (boutS12);
\node (S2) at (-\cubex, 1,-2) {$S_2$};
\coordinate (boutS21) at (-\cubex, 0.5, 0);
\coordinate (boutS22) at (-\cubex, 0.5, -\cubez);
\draw[<->] (boutS21) -- (boutS22);
\node (S3) at (-2.5, -\cubey-1,0) {$S_3$};
\coordinate (boutS31) at (-\cubex, -\cubey-0.5, 0);
\coordinate (boutS32) at (0, -\cubey-0.5, 0);
\draw[<->] (boutS31) -- (boutS32);
\end{tikzpicture}
\caption{Visual representation of a 3-context without its crosses\label{fig:context}.}
\end{figure}

\medskip
\begin{definition}

\medskip
An \emph{$n$-concept} of $\mathcal{C} = (S_1,\dots,S_n,R)$ is an $n$-tuple $(X_1,\dots,X_n)$ such that~$\prod_{i\in \{1,\dots,n\}} X_i\subseteq R$ and there are no $i\in \{1,\dots,n\}$ and $k\in S_i\setminus X_i$ such that~$\{k\}\times \prod_{j\in \{1,\dots,n\}\setminus \{i\}} X_j\subseteq R$.

\medskip
\end{definition}

An $n$-concept can be viewed as a maximal $n$-dimensional box full of crosses up to permutations on the elements of the dimensions.
We denote by $\mathcal T(\mathcal C)$ the set of $n$-concepts of a $n$-context $\mathcal C$.

\medskip
\begin{figure}[ht]

\begin{center}
\begin{tabular}{c|lll||lll}
 & a & b & c & a & b & c \\
\hline
1 &$\times$&$\times$&\hphantom{$\times$}& $\times$ & & \\

2 &  & & & $\times$ & & \\

3 & $\times$ & & & $\times$ & & $\times$\\
\hline
\multicolumn{1}{c|}{} & \multicolumn{3}{c||}{$\alpha$} & \multicolumn{3}{c}{$\beta$}\\

\end{tabular}
\caption{An example of a $2\times 3\times 3$ $3$-context\label{fig:exampleContext}.}
\end{center}
\end{figure}

\medskip
In the Fig.~\ref{fig:exampleContext} example, seven 3-concepts are present: $(\alpha, 1, ab)$, $(\alpha\beta, 13, a)$, $(\beta, 3, ac)$, $(\beta, 123, a)$, $(\alpha\beta, 123, \emptyset)$, $(\alpha\beta,\emptyset,abc)$ and $(\emptyset, 123,abc)$.

\medskip
\begin{definition}[From~\cite{DBLP:journals/order/Voutsadakis07}]
$\mathcal S=(S,\lesssim_1,\dots,\lesssim_n)$ is an $n$-ordered set if for $A\in S$ and $B\in S$ :
\begin{enumerate}
\item $A\sim_i B,\forall i\in\{1,\dots,n\} \Rightarrow A=B$ (Uniqueness Condition)
\item $A\lesssim_i B, \forall i\in(\{1,\dots,n\}\setminus j)\Rightarrow B\lesssim_j A$ (Antiordinal Dependency)
\end{enumerate}
\end{definition}

\medskip
For the Antiordinal Dependency condition to be respected, it is sufficient to have $i,j\in \{1,\dots,n\}, i\not=j$ such that $A\lesssim_i B$ and $B\lesssim_j A$.

\medskip
The set of all the $n$-concepts of an $n$-context together with the $n$ quasi-orders $\lesssim_i$ induced by the inclusion relations on the subsets of each dimension forms an $n$-ordered set.
Additionally, the existence of some particular joins makes it a complete $n$-lattice.
Every $n$-lattice can be associated with some $n$-context~\cite{DBLP:journals/order/Voutsadakis02}. 

\medskip
\begin{definition}\label{def:couche}
Let $x\in \mathcal S_i$ be an element of a dimension $i$.
We denote by $\mathcal C_x$ the $(n-1)$-context $\mathcal C_x = (S_1,\dots,S_{i-1},S_{i+1},\dots,S_n, \mathcal R_x)$ where \[\mathcal R_x=\{(s_1,\dots,s_{i-1}, s_{i+1},\dots,s_n)\ |\ (s_1,\dots,s_{i-1},x,s_{i+1},\dots,s_n)\in \mathcal R\}\]
\end{definition}

With the previous definition, $C_x$ is the $(n-1)$-context corresponding to element $x$, represented by the shaded area in Fig.~\ref{fig:couche}.
\begin{figure}[ht]
\centering
\begin{tikzpicture}[scale=0.53]
\pgfmathsetmacro{\cubex}{5}
\pgfmathsetmacro{\cubey}{5}
\pgfmathsetmacro{\cubez}{5}
\draw[thick] (0,0,0) -- ++(-\cubex,0,0) -- ++(0,-\cubey,0) -- ++(\cubex,0,0) -- cycle;
\draw[thick] (0,0,0) -- ++(0,0,-\cubez) -- ++(0,-\cubey,0) -- ++(0,0,\cubez) -- cycle;
\draw[thick] (0,0,0) -- ++(-\cubex,0,0) -- ++(0,0,-\cubez) -- ++(\cubex,0,0) -- cycle;
\draw[dotted, thick] (-\cubex,-\cubey,0) -- ++(0,0,-\cubez) -- ++ (\cubex,0,0);
\draw[dotted, thick] (-\cubex,-\cubey,-\cubez) -- ++(0,\cubey,0);
\foreach \x in {1,...,\cubex}{
	\draw (-\cubex+\x, -\cubey, 0) -- ++ (0,\cubey, 0);
	\draw (-\cubex-0.01+\x, 0, 0) -- ++ (0,0,-\cubez);
}
\foreach \y in {1,...,\cubey}{
	\draw (-\cubex, -\cubey+\y, 0) -- ++ (\cubex,0, 0);
	\draw (0, -\cubey-0.01+\y, 0) -- ++ (0, 0, -\cubez);
}
\foreach \z in {1,...,\cubez}
	\draw (-\cubex, 0, -\z)-- ++ (\cubex,0, 0) -- ++ (0,-\cubey,0);

\node (x) at (-0.25,0.25,-\cubez) {$x$};
\fill[color=gray!20, pattern=north east lines, very thin] (0,0,-\cubez) -- ++ (-1,0,0) -- ++ (0,0,\cubez) -- ++ (0,-\cubey,0) -- ++ (1,0,0) -- ++ (0,0,-\cubez) --cycle;

\node (S1) at (-\cubex-1, -2.5,0) {$S_1$};
\coordinate (boutS11) at (-\cubex-0.5, 0, 0);
\coordinate (boutS12) at (-\cubex-0.5, -\cubey, 0);
\draw[<->] (boutS11) -- (boutS12);
\node (S2) at (-\cubex, 1,-2) {$S_2$};
\coordinate (boutS21) at (-\cubex, 0.5, 0);
\coordinate (boutS22) at (-\cubex, 0.5, -\cubez);
\draw[<->] (boutS21) -- (boutS22);
\node (S3) at (-2.5, -\cubey-1,0) {$S_3$};
\coordinate (boutS31) at (-\cubex, -\cubey-0.5, 0);
\coordinate (boutS32) at (0, -\cubey-0.5, 0);
\draw[<->] (boutS31) -- (boutS32);
\end{tikzpicture}
\caption{If $x$ is an element of $\mathcal S_3$ in this 3-context, then $C_x$ is the 2-context resulting from fixing $x$\label{fig:couche}.}
\end{figure}
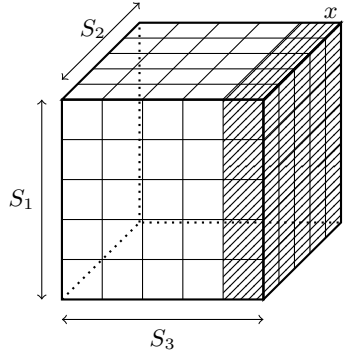

\section{Introducer concepts in $n$-Lattices}
\label{sec:introducers}

In this section, we define introducer concepts in $n$-lattices. In the next definitions, we will call dimension $i$ the height, while all other dimensions are called the width.

\medskip
\begin{definition}
Let $x\in\mathcal S_i$ be an element of a dimension $i$.
The concepts with maximal width such that $x$ is in the height are the introducer concepts of $x$.
The set of introducer concepts of $x$ is denoted by $I_x$.
\end{definition}

\medskip
In the Fig.~\ref{fig:exampleContext} example, we have $I_\alpha=\{(\alpha\beta,13,a), (\alpha,1,ab)\}$.

\medskip
We denote by $\mathcal I(\mathcal S_i) = \bigcup_{x\in\mathcal S_i} I_x$ the set of concepts that introduce an element of dimension $i$ and by $\mathcal I(\mathcal C) = \bigcup_{i\in\{1,\dots,n\}} \mathcal I(\mathcal S_i)$ the set of all introducer concepts of a context $\mathcal C$.

\medskip
As in the 2-dimensional case, irreducible elements are introducer concepts.
However, $\mathcal I(\mathcal C)$ is not always strictly the set of irreducible elements as some applications expect the context not to be reduced.

\medskip
\begin{proposition}\label{prop:nposet}
$(\mathcal I(\mathcal C),\lesssim_1,\dots,\lesssim_n)$ is an $n$-ordered set.
\end{proposition}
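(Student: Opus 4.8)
The plan is to exploit the fact, recalled just above, that the full set of $n$-concepts $\mathcal T(\mathcal C)$ equipped with the quasi-orders $\lesssim_1,\dots,\lesssim_n$ induced by componentwise inclusion is already an $n$-ordered set, and then to argue that this property is inherited by substructures. Since $\mathcal I(\mathcal C)\subseteq \mathcal T(\mathcal C)$ by construction, and the quasi-orders considered on $\mathcal I(\mathcal C)$ are exactly the restrictions of those on $\mathcal T(\mathcal C)$, the proposition will follow from a short verification of the two defining axioms.

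First I would note that the restriction of a quasi-order to a subset is again a quasi-order, so each $\lesssim_i$ restricted to $\mathcal I(\mathcal C)$ is reflexive and transitive; this is what makes $(\mathcal I(\mathcal C),\lesssim_1,\dots,\lesssim_n)$ a legitimate candidate $n$-ordered set in the first place. Then I would check the Uniqueness Condition: let $A,B\in\mathcal I(\mathcal C)$ with $A\sim_i B$ for every $i\in\{1,\dots,n\}$. Since $A$ and $B$ are in particular elements of $\mathcal T(\mathcal C)$ and the relations agree on $\mathcal I(\mathcal C)$, the same equivalences hold in $\mathcal T(\mathcal C)$, and the Uniqueness Condition for $\mathcal T(\mathcal C)$ gives $A=B$. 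The Antiordinal Dependency is handled identically: if $A,B\in\mathcal I(\mathcal C)$ satisfy $A\lesssim_i B$ for all $i\in\{1,\dots,n\}\setminus\{j\}$, then these relations also hold between $A$ and $B$ viewed in $\mathcal T(\mathcal C)$, hence $B\lesssim_j A$ there, and since both elements lie in $\mathcal I(\mathcal C)$ this relation is retained in the restricted structure.

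The one point worth stating explicitly — and the reason the argument is so short — is that both axioms of an $n$-ordered set are universally quantified implications whose hypothesis and conclusion only mention the relations $\lesssim_i$; passing to a subset can only remove pairs $(A,B)$ for which the implication must be checked, never add new ones, so no axiom can fail on $\mathcal I(\mathcal C)$ that did not already fail on $\mathcal T(\mathcal C)$. Accordingly there is no genuine obstacle here: the proposition is a structural sanity check guaranteeing that restricting an $n$-lattice to its introducer concepts yields a well-defined $n$-ordered set, on which the remaining sections can build. The only subtlety to flag, and it is minor, is that we must be explicit that $\mathcal I(\mathcal C)$ carries the induced relations rather than any other choice, and that the definition of $n$-ordered set places no nonemptiness or completeness requirement that the restriction could violate.
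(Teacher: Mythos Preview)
Your proof is correct, but it takes a genuinely different route from the paper's. The paper verifies the two axioms directly from first principles: for Uniqueness it unwinds $A\sim_i B$ componentwise to get $A_k=B_k$ for all $k$, and for Antiordinal Dependency it argues by contradiction that if $A\lesssim_k B$ held with no reverse relation in any coordinate, every component of $A$ would sit inside the corresponding component of $B$, violating the maximality that makes $A$ a concept. In other words, the paper never invokes the ambient structure $\mathcal T(\mathcal C)$; it re-derives the needed facts from the definition of $n$-concept itself. Your approach is more structural: you use that the paper has already recorded that $(\mathcal T(\mathcal C),\lesssim_1,\dots,\lesssim_n)$ is an $n$-ordered set, observe that both axioms are universal Horn sentences in the relations $\lesssim_i$, and conclude that any subset with the induced relations inherits them. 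This is shorter and conceptually cleaner, and it makes explicit why the proposition is really a sanity check rather than a new phenomenon; the paper's argument, by contrast, is self-contained and would survive even if the $n$-ordered-set property of $\mathcal T(\mathcal C)$ had not been quoted earlier.
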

\begin{proof}
Let $A$ and $B$ be in $\mathcal I(\mathcal C)$. We recall that $A_i\subseteq B_i\Leftrightarrow A\lesssim_iB$ and that $A_i=B_i\Leftrightarrow A\sim_i B$.
Without loss of generality, let $A\in\mathcal I(\mathcal S_i)$ and $B\in\mathcal I(\mathcal S_j)$.

If $\forall k\in\{1,\dots,n\}$, $A\sim_k B$, then  $\forall k\in\{1,\dots,n\}$, $A_k = B_k$, so $A=B$ (Uniqueness Condition).

If $A$ and $B$ are distinct, $\exists k\in\{1,\dots,n\}$ such that $A\lesssim_k B$ or $B\lesssim_k A$.
Without loss of generality, suppose $A\lesssim_k B$.
Suppose that there is no $\ell\in\{1,\dots,n\}$ such that $B\lesssim_\ell A$.
That implies that all the components of $A$ are included in the components of $B$.
Then this is in contradiction with the maximality condition implied by $A$ being a concept.
Thus $\exists j\in\{1,\dots,n\}\setminus i$ such that $B\lesssim_j A$ (Antiordinal Dependency).
\qed
\end{proof}

\medskip
As in the 2-dimensional case where concept lattices and GSH are respectively complete lattices and partially ordered sets, in the $n$-dimensional case we have complete $n$-lattices and $n$-ordered sets.

\medskip
\begin{proposition}\label{prop:fromConcepts}


Let $x\in S_i$. If $(X_1,\dots,X_{i-1},X_{i+1},\dots,X_n)$ is an $(n-1)$-concept of $\mathcal{C}_x$, then $(X_1,\dots,\{x\}\cup X_i,\dots,X_n)$ is an introducer of $x$. 

If $(X_1,\dots,\{x\}\cup X_i,\dots,X_n)$ is an introducer of $x$, then there exists an $(n-1)$-concept $(X_1,\dots,X_{i-1},X_{i+1},\dots,X_n)$ in $\mathcal{C}_x$.

\end{proposition}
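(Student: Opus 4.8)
The plan is to establish both directions through an explicit correspondence between the $(n-1)$-concepts of $\mathcal C_x$ and the introducers of $x$, after first proving two elementary facts. \emph{Fact 1 (completion).} In an $m$-context $(T_1,\dots,T_m,\mathcal R')$, any relation-subset, i.e. any tuple with $\prod_j Z_j\subseteq\mathcal R'$, can be enlarged componentwise until no element can be added, producing an $m$-concept $(Z^\ast_1,\dots,Z^\ast_m)$ with $Z_j\subseteq Z^\ast_j$ for every $j$. \emph{Fact 2 (comparable concepts are equal).} If $D$ and $D'$ are $m$-concepts of the same context with $D_j\subseteq D'_j$ for all $j$, then $D=D'$: otherwise $D_\ell\subsetneq D'_\ell$ for some $\ell$, and any $k\in D'_\ell\setminus D_\ell$ satisfies $\{k\}\times\prod_{j\neq\ell}D_j\subseteq\{k\}\times\prod_{j\neq\ell}D'_j\subseteq\mathcal R'$, contradicting the maximality of $D$. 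I will also use the immediate remark that if $E$ is an $n$-concept with $x\in E_i$, then $\prod_{j\neq i}E_j\subseteq\mathcal R_x$, since this is just $\prod_j E_j\subseteq\mathcal R$ read off the slice $\mathcal C_x$ of Definition~\ref{def:couche}.

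For the first implication, let $D=(X_1,\dots,X_{i-1},X_{i+1},\dots,X_n)$ be an $(n-1)$-concept of $\mathcal C_x$ and set $Y_i=\{\,s\in S_i\mid\{s\}\times\prod_{j\neq i}X_j\subseteq\mathcal R\,\}$. From $\prod_{j\neq i}X_j\subseteq\mathcal R_x$ we get $x\in Y_i$. I claim $E=(X_1,\dots,X_{i-1},Y_i,X_{i+1},\dots,X_n)$ is an $n$-concept: its box lies in $\mathcal R$ by the definition of $Y_i$; no element can be added in dimension $i$, again by that definition; and if some $k\in S_\ell\setminus X_\ell$ with $\ell\neq i$ could be added in dimension $\ell$, then $\{k\}\times Y_i\times\prod_{j\notin\{i,\ell\}}X_j\subseteq\mathcal R$, and since $x\in Y_i$ this gives $\{k\}\times\prod_{j\notin\{i,\ell\}}X_j\subseteq\mathcal R_x$, contradicting the maximality of $D$ in $\mathcal C_x$. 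It remains to check that $E$ has maximal width among $n$-concepts containing $x$ in dimension $i$: if $E'$ is such a concept with $X_j\subseteq E'_j$ for all $j\neq i$, then $\prod_{j\neq i}E'_j\subseteq\mathcal R_x$, so by Fact~1 this relation-subset of $\mathcal C_x$ extends to an $(n-1)$-concept $D'$ with $X_j\subseteq E'_j\subseteq D'_j$; Fact~2 applied in $\mathcal C_x$ gives $D'=D$, hence $E'_j=X_j$ for all $j\neq i$. Thus $E$ is an introducer of $x$ whose $i$-th component $Y_i$ contains $x$ — the set written $\{x\}\cup X_i$ in the statement.

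For the converse, let $E$ be an introducer of $x$: an $n$-concept with $x\in E_i$ whose width is maximal among $n$-concepts containing $x$ in dimension $i$. Put $D=(E_1,\dots,E_{i-1},E_{i+1},\dots,E_n)$; by the remark above, $\prod_j D_j\subseteq\mathcal R_x$, so $D$ is a relation-subset of $\mathcal C_x$. Suppose $D$ were not an $(n-1)$-concept. Then some $k\in S_\ell\setminus E_\ell$ with $\ell\neq i$ satisfies $\{k\}\times\prod_{j\notin\{i,\ell\}}E_j\subseteq\mathcal R_x$. Completing (Fact~1) the relation-subset obtained from $D$ by inserting $k$ into its $\ell$-th component gives an $(n-1)$-concept $D^\ast$ of $\mathcal C_x$ with $D_j\subseteq D^\ast_j$ for all $j$ and $D^\ast_\ell\supsetneq E_\ell$. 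Applying the first implication to $D^\ast$ yields an introducer $E^\ast$ of $x$ whose width equals $D^\ast$, which is strictly larger, in the $\ell$-th coordinate, than the width of $E$ — contradicting the maximality of the width of $E$. Hence $D$ is an $(n-1)$-concept of $\mathcal C_x$.

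The preliminary facts and the index bookkeeping are routine; the delicate point is to keep the two notions of maximality distinct and invoke each at the right moment — an $n$-concept is maximal in all $n$ dimensions, whereas an introducer need only be maximal in the $n-1$ width dimensions among the concepts pinned at $x$. This is exactly why Fact~2 is needed to show that the concept constructed in the first implication is not merely a concept but a width-maximal one, and why the second implication is most cleanly closed by feeding the "too small" width $D$ back through the first implication to manufacture a contradiction.
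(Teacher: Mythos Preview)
Your proof is correct and follows essentially the same route as the paper's: extend an $(n-1)$-concept of $\mathcal C_x$ in the $i$-th dimension to obtain an $n$-concept whose width cannot be enlarged, and conversely show that a non-maximal slice would yield a wider concept containing $x$, contradicting the introducer property. The main difference is one of rigour: the paper simply asserts that the extended concept ``is maximal in width'' and that a witnessing $Y$ exists, whereas you make both steps explicit via your Facts~1 and~2 and by feeding the enlarged slice back through the first implication.
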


\begin{proof}
We suppose, without loss of generality, that $x\in \mathcal S_1$. The $(n-1)$-concepts of $\mathcal C_x$ are of the form $(X_2,\dots,X_n)$.
If $(x,X_2,\dots,X_n)$ is an $n$-concept of $\mathcal C$, then it is minimum in height and maximal in width and is thus an introducer of $x$.

\medskip
If $(x,X_2,\dots,X_n)$ is not an $n$-concept of $\mathcal C$, as $(X_2,\dots,X_n)$ is a $(n-1)$-concept of $\mathcal C_x$, then $(x,X_2,\dots,X_n)$ can be augmented only on the first dimension.
As such, there exists an $n$-concept $(\{x\}\cup X_1,X_2,\dots,X_n)$ that is maximal in width and is thus an introducer of $x$.

\medskip
Suppose that there is an $X=(X_1,\dots,X_n)\in I_x, x\in X_1$, that is not obtained from an $(n-1)$-concept of $\mathcal C_x$ by extending $X_1$.
It means that $(X_2,\dots,X_n)$ is not maximal in $\mathcal C_x$ (else it would be an $(n-1)$-concept).
Then, there exists an $n$-concept $Y=(Y_1,Y_2\dots,Y_n)$ with $Y_1\subseteq X_1$ and $X_i\subseteq Y_i$, for $i\in\{2,\dots,n\}$.
This is a contradiction with the fact that $X$ is an introducer of $x$.\qed
\end{proof}

Proposition~\ref{prop:fromConcepts} states that every $(n-1)$-concept of $\mathcal C_x$ maps to an introducer of $x$ in $\mathcal{C}$, and that every introducer of $x$ is the image of an $(n-1)$-concept of $\mathcal C_x$.

%

\section{Algorithm}
\label{sec:algo}

In this section, we present an algorithm to compute the introducer concepts in an $n$-context.

\medskip
Algorithm~\ref{algo:introDim} computes the introducers for each element of a dimension $i$. For a given element $x\in \mathcal S_i$, we compute $\mathcal T(\mathcal C_x)$. 
Then, for each $(n-1)$-concept $X\in \mathcal T(\mathcal C_x)$, we build the set $X_i$ needed to extend $X$ into an $n$-concept.
An element $y$ is added to $X_i$ when $y\times\prod_{j\not=i} X_j\subseteq \mathcal R$, that is if there exists an $(n-1)$-dimensional box full of crosses (but not necessarily maximal) in $\mathcal R$, at level $y$. The final set $X_i$ always contains at least $x$.

\medskip
\begin{algorithm}[ht]
\DontPrintSemicolon 
\KwIn{$\mathcal C$ an $n$-context, $i\in\{1,\dots,n\}$ a dimension}
\KwOut{$\mathcal I(\mathcal S_i)$ the set of introducer concepts of elements of dimension $i$}

$I\gets \emptyset$\;
\ForEach{$x\in\mathcal S_i$}{
	$C\gets\emptyset$\;
	\ForEach{$X=(X_1,\dots,X_{i-1},X_{i+1},\dots,X_n)\in\mathcal T(\mathcal C_x)$}{
		$X_i\gets \emptyset$\;\label{candidate}
        \ForEach{$y\in\mathcal S_i$}{\label{loop:extend}
        	\If{$\prod_{j\not=i} X_j\times y\subseteq \mathcal R$}{
				$X_i\gets X_i\cup y$\;
			}
		}
        $C\gets C \cup (X_1,\dots,X_i,\dots,X_n)$\;\label{line:cup1}
	}
    $I\gets I\cup C$\;\label{line:cup2}
}
\Return{$I$}\;
\caption{{\sc IntroducerDim}$(\mathcal C, i)$}
\label{algo:introDim}
\end{algorithm}

\medskip
Algorithm~\ref{algo:intro} calls Algorithm~\ref{algo:introDim} on each dimension. This ensure that each element of each dimension will be scanned for its introducer concepts. Algorithm~\ref{algo:intro} computes the introducer set $\mathcal I(\mathcal C)$ for $n$-context $\mathcal C$.

\medskip
\begin{algorithm}[ht]
\DontPrintSemicolon 
\KwIn{$\mathcal C$ an $n$-context}
\KwOut{$\mathcal I(\mathcal C)$ the set of all introducer concepts for $\mathcal C$}
$R \gets \emptyset$\;
\ForEach{dimension $i$} {
	$R\gets R\cup \text{\sc IntroducerDim}(\mathcal C,i)$\;\label{line:cup3}
}
\Return{$R$}\;
\caption{{\sc Introducers}$(\mathcal C)$}
\label{algo:intro}
\end{algorithm}

\medskip
Algorithm~\ref{algo:introDim} requires the computation of the $(n-1)$-concepts from an $(n-1)$-context. Several algorithms exist to complete this task~\cite{DBLP:journals/tkdd/CerfBRB09, makhalova2017incremental, bazin2017incremental}.
\medskip
\begin{proposition}
Algorithm~\ref{algo:introDim} ends and returns all the introducer concepts of elements of the dimension $\mathcal S_i$.
\end{proposition}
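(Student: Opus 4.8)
The plan is to prove the two clauses of the proposition separately: termination and correctness. Termination is essentially bookkeeping. The outer loop ranges over $\mathcal S_i$, which is finite; for each $x$ the inner loop ranges over $\mathcal T(\mathcal C_x)$, which is finite because $\mathcal C_x$ is a finite $(n-1)$-context and has finitely many $(n-1)$-concepts; and the innermost loop ranges over $\mathcal S_i$ again. Since the subroutine computing $\mathcal T(\mathcal C_x)$ is assumed to terminate, every loop is over a finite set and each iteration does a bounded amount of work, so the algorithm halts. I would state this in one or two sentences.

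For correctness I would show the two inclusions $I \subseteq \mathcal I(\mathcal S_i)$ and $\mathcal I(\mathcal S_i) \subseteq I$, leaning directly on Proposition~\ref{prop:fromConcepts}. For soundness ($I \subseteq \mathcal I(\mathcal S_i)$): fix $x \in \mathcal S_i$ and an $(n-1)$-concept $X = (X_1,\dots,X_{i-1},X_{i+1},\dots,X_n) \in \mathcal T(\mathcal C_x)$; I would argue that the set $X_i$ built by the innermost loop is exactly $\{\, y \in \mathcal S_i \mid \{y\}\times\prod_{j\neq i} X_j \subseteq \mathcal R \,\}$, and in particular $x \in X_i$ since $X$ being an $(n-1)$-concept of $\mathcal C_x$ forces $\prod_{j\neq i} X_j \subseteq \mathcal R_x$, i.e.\ $\{x\}\times\prod_{j\neq i} X_j \subseteq \mathcal R$. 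Hence $X_i = \{x\} \cup X_i'$ for some set, and Proposition~\ref{prop:fromConcepts} (first clause) tells us the resulting tuple $(X_1,\dots,X_i,\dots,X_n)$ is an introducer of $x$, so it belongs to $\mathcal I(\mathcal S_i)$. For completeness ($\mathcal I(\mathcal S_i) \subseteq I$): take any $Y \in \mathcal I(\mathcal S_i)$, say $Y \in I_x$ with $x \in Y_i$. By Proposition~\ref{prop:fromConcepts} (second clause) the projection $(Y_1,\dots,Y_{i-1},Y_{i+1},\dots,Y_n)$ is an $(n-1)$-concept of $\mathcal C_x$, so it is enumerated as some $X \in \mathcal T(\mathcal C_x)$; I then need that the $X_i$ the algorithm computes from this $X$ equals $Y_i$. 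One inclusion is the soundness computation above; for the reverse I would note that any $y \in Y_i$ satisfies $\{y\}\times\prod_{j\neq i} Y_j \subseteq \mathcal R$ because $Y$ is an $n$-concept, hence $y$ passes the test and is added to $X_i$. So $X_i = Y_i$ and $Y$ is placed in $C$, hence in $I$.

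The one point that needs a little care — and which I expect to be the main obstacle — is the last step: showing the $X_i$ reconstructed from the $(n-1)$-concept coincides exactly with the original $Y_i$, rather than merely containing it. The subtlety is that the test ``$\{y\}\times\prod_{j\neq i} X_j \subseteq \mathcal R$'' in the algorithm is not a maximality test, so a priori $X_i$ could be too large; one must invoke the maximality of $Y$ as an $n$-concept to rule this out, i.e.\ that no $y \notin Y_i$ can satisfy $\{y\}\times\prod_{j\neq i}Y_j\subseteq\mathcal R$, for otherwise $Y_i$ could be enlarged, contradicting that $Y$ is an $n$-concept. This is exactly the content packaged inside Proposition~\ref{prop:fromConcepts}, so if I am allowed to cite it freely the argument is short; the honest thing is to make explicit that the map ``$(n-1)$-concept of $\mathcal C_x$ $\mapsto$ extend by all valid levels'' is the inverse of the projection map from $I_x$, which is what Proposition~\ref{prop:fromConcepts} asserts. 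I would close by remarking that this establishes a bijection between $\mathcal T(\mathcal C_x)$ and $I_x$, so the algorithm enumerates each introducer of dimension $i$ exactly once (up to the union absorbing duplicates across different $x$), completing the proof.
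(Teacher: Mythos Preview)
Your proposal is correct and follows the same approach as the paper: termination by finiteness of the loop ranges, and correctness by appeal to Proposition~\ref{prop:fromConcepts}. If anything, you are more careful than the paper, which only argues the completeness direction (every introducer is output) and leaves soundness and the exact reconstruction $X_i = Y_i$ implicit; your treatment of those points is accurate and fills in what the paper glosses over.
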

\begin{proof}
The $\mathcal S_i$ are finite.
The set of $(n-1)$-concepts of an $(n-1)$-context resulting from fixing an element is also finite.
Algorithm~\ref{algo:introDim} passes through each element $x\in\mathcal S_i$ and on each concept of $\mathcal T(\mathcal C_x)$ exactly once.
The maximality test on dimension $i$ looks at the elements of $\mathcal S_i$, which is finite.
Thus, the algorithm ends.

Proposition~\ref{prop:fromConcepts} ensures that every introducer of $x$ can be computed from the concepts of $\mathcal C_x$.
Thus, every introducer of an element of the dimension $S_i$ is returned.\qed
\end{proof}

\medskip
At the time of writing, the only known bound for the number of $n$-concepts of an $n$-context $(S_1,\dots,S_n,R)$ is $\prod_{i\in\{1,\dots,n\}\setminus k} 2^{|\mathcal S_i|}$ with $k=\argmax_{k\in\{1,\dots,n\}}|\mathcal S_k|$.
Let $\mathbb K_n$ be the maximal number of $n$-concepts in an $n$-context.
Computing $\mathcal C_x$ from $\mathcal C$ is in $O(|\mathcal R|)$.
Building the set $X_i$ that extends an $(n-1)$-concept of $\mathcal C_x$ into an introducer of $x$ can be done in $O(|\mathcal S_i|\times\prod_{j\not=i}|X_j|)$.
We denote by $T$ the complexity of computing $\mathcal T(\mathcal C_x)$ from $\mathcal C_x$.

Thus the complexity of Algorithm~\ref{algo:introDim} for context $\mathcal C =(\mathcal S_1,\dots,\mathcal S_n,\mathcal R)$ and dimension $i$ is $O\left(|\mathcal S_i|\times \left(T+\mathbb K_{n-1} \times \prod_{j\in\{1,\dots,n\}}|\mathcal S_j|\right)\right)$ and the complexity of Algorithm~\ref{algo:intro} is $O\left(\sum_{i\in\{1,\dots,n\}}\left(|\mathcal S_i|\times \left(T+\mathbb K_{n-1} \times \prod_{j\in\{1,\dots,n\}}|\mathcal S_j|\right)\right)\right)$.

\section{Conclusion}

In this paper, we introduced the $n$-dimensional equivalent of Galois Sub-Hierarchies or AOC-posets.
We showed that the set of introducer concepts, together with the $n$ quasi-orders induced by the inclusion on each dimension, forms an $n$-ordered set.
We provided an algorithm to compute the set of introducer concepts from an $n$-context.

\medskip
Although our approach was not initially motivated by an applicative problem, it would be interesting to use the notion of introducer concepts in $n$-dimensions to address some specific problems in software engineering or data mining.

\medskip
It would be interesting to experiment on datasets (real and generated) to evaluate the gains (in term of number of concepts) of the restriction to introducer concepts.

\section*{Acknowledgements}
This research was partially supported by the European Union's ``\emph{Fonds Europ\'een de D\'eveloppement R\'egional (FEDER)}'' program.

\bibliographystyle{unsrt}
\bibliography{Biblio}

\end{document}